\documentclass[letterpaper, 10pt, conference]{ieeeconf} 
\IEEEoverridecommandlockouts
\usepackage{graphicx}
\usepackage{xcolor}
\usepackage{amsmath}
\usepackage{mathtools}
 \mathtoolsset{showonlyrefs}
\usepackage{tikz}
\usepackage{verbatim}
\usepackage{subcaption}
\usepackage{pgfplots}
\pgfplotsset{compat=1.10}
\usepackage{epstopdf} 
\usepackage{relsize}
\usepackage{booktabs}
\definecolor{Gray}{gray}{0.90}
\usepackage{colortbl}

\DeclareMathOperator*{\argmin}{arg\,min}

\usepackage{algorithm}
\usepackage{algpseudocode}
\usepackage{color}
\usepackage{blkarray}
\usepackage{sidecap}
\usepackage{multirow}

\usepackage{amssymb}
\usepackage{url}

\newcommand{\calC}[0]{\mathcal{C}}

\newcommand{\calU}[0]{\mathcal{U}}

\definecolor{amber}{rgb}{1.0, 0.75, 0.0}

\usepackage{hyperref}
\usepackage{resizegather}

\usepackage{makecell}
\usepackage{censor}

\allowdisplaybreaks
\usepackage{stfloats}

\usepackage{bm}
\usepackage{amsmath}
\let\labelindent\relax
\usepackage{enumitem}

\newcommand{\R}{\mathbb{R}}
\newcommand{\cF}{\mathcal{F}}
\newcommand{\cW}{\cF^{\rm agg}}
\newcommand{\vv}[1]{\bm{#1}}
\newcommand{\perpv}[1]{\vv{#1}^{\perp}}

\newtheorem{theorem}{Theorem}

\algtext*{EndIf}
\algtext*{EndFor}
\algtext*{EndProcedure}

\begin{document}{

\title{\bf Communication-Free Obstacle Localization from Aggregate Wrench Measurements in Leader--Follower Cooperative Transport}
\author{ Amin Kashiri, Aditya Mohan, and Yasin Yaz{\i}c{\i}o\u{g}lu
\thanks{This work has been submitted to the IEEE for possible publication.
Copyright may be transferred without notice, after which this version may
no longer be accessible.}
\thanks{A. Kashiri is with the Department of Electrical and Computer Engineering at Northeastern University, Boston, MA ({\tt kashiri.a@northeastern.edu}).}%
\thanks{A. Mohan is with the Department of Electrical and Computer Engineering at Northeastern University, Boston, MA ({\tt lnu.un@northeastern.edu}).}%
\thanks{Y. Yaz{\i}c{\i}o\u{g}lu is with the Departments of Mechanical and Industrial Engineering and Electrical and Computer Engineering at Northeastern University, Boston, MA  ({\tt y.yazicioglu@northeastern.edu}).}%
}
}

\maketitle
\overrideIEEEmargins   
\begin{abstract}
We consider obstacle localization for a team of robots cooperatively transporting a rigid payload without explicit inter-robot communication. A leader robot directs the payload's motion, while follower robots assist and react to locally detected obstacles. The leader measures the followers' aggregate wrench, i.e., the combined force and torque they exert on the payload, but cannot directly distinguish their individual reactions. We design a follower control law that allows the leader to recover obstacle locations from these measurements. Each follower resists motion toward nearby obstacles, resulting in a piecewise-linear relationship between the payload’s translational and angular velocity and the aggregate wrench. Changes between adjacent linear regions reveal an obstacle’s bearing and distance and identify the responding follower. We give sufficient conditions for exact recovery at a fixed payload configuration and develop an adaptive probing procedure in which the leader applies translational and rotational inputs to the payload to obtain the required measurements. We demonstrate the performance of the proposed method in simulations.
\end{abstract}

\section{Introduction}\label{sec:intro}

When a team of robots carries a shared payload without communicating, the payload is their only coordination channel. In leader--follower schemes, a leader that knows the destination steers the payload while each follower reinforces the motion measured at its own grip, so the team transports a rigid body with no communication at all \cite{wang2016kinematic,wang2016force}; the principle has been implemented on robots \cite{connector} and extended to flexible payloads \cite{flex}. Where communication or a shared model is available, transport is instead posed as decentralized optimization \cite{lissandrini2020decentralized,muhammed2026real,dmpclf,tian2025multi} or adaptive control over a broadcast common state \cite{culbertson2021decentralized}. In these works, obstacle information is either absent or supplied directly to the robot that plans around it.

A follower, however, may sense an obstacle that the leader cannot, and without communication it can convey this only through how it pushes on the payload. Such interaction has been used for inference: followers infer the motion intended by a leader that knows the obstacles \cite{bech}, a leader recovers obstacles from the input of the single follower in a two-robot team \cite{bujarbaruah2021learning}, and inverse safety filtering infers constraints from the safety-filtered actions of other agents \cite{isf}. Each attributes the observed response to one robot. With several followers, the leader instead senses one aggregate wrench, three numbers regardless of team size, in which all reactions are summed, so no single measurement reveals which follower responded or to what.

In this paper, we show that the leader can separate these reactions by actively probing the payload. We design a follower law under which each follower resists only motion carrying it toward a nearby obstacle. The map from payload twist to aggregate wrench then becomes piecewise linear, and the change between adjacent linear regions encodes one obstacle's bearing and distance together with the follower that detected it. Recovering piecewise-linear maps from queries is well studied for hybrid systems \cite{pwa} and ReLU networks \cite{carlini}; here, the structure is designed into the team so that every region boundary has a physical meaning. Our contributions are: (i) a communication-free follower law that makes individual obstacle reactions separable in the aggregate wrench; (ii) sufficient conditions under which measurements at a fixed payload configuration uniquely determine every detected obstacle and its responding follower; and (iii) an adaptive probing procedure that selects the leader's test motions to obtain these measurements, evaluated in simulation.

\section{Problem Formulation}\label{sec:bg}

\subsection{Setup}\label{sec:team}

We consider $N$ robots transporting a rigid payload in the plane. Each robot maintains a rigid grip on the payload and can apply
both force and torque. Robot $1$ is the \emph{leader} and alone
knows the destination, while robots $2,\dots,N$ are the \emph{followers}. The followers
do not know their world positions or the destination, and they do not observe
other robots' states or inputs. The robots do not explicitly communicate with each other.

We express vectors in a world frame for the analysis,
while each follower evaluates the control laws in its own grip frame, fixed to
the payload. Let $\vv{r}_i$ denote robot $i$'s \emph{grip offset} from the payload's
center of mass, expressed in the world frame, and let $\vv{p}_i \in \R^2$ denote the
grip's world position. Following the balanced-grip assumption in \cite{wang2018ouijabots},
we assume
$\sum_{i=2}^{N}\vv{r}_i=\vv{0}$.
This arrangement ensures that equal forces at the follower grips produce no
net moment about the payload's center of mass.

The payload has mass $M$, moment of inertia $J$ and viscous-friction coefficient $\mu_v$. Its \emph{pose} is $\vv{x}=[\vv{p}_c^{\!\top},\,\theta]^{\!\top}$, where $\vv{p}_c\in\R^2$ is the position of its center of mass and $\theta\in\R$ its heading. Let $\vv{v}_c=\dot{\vv{p}}_c$ and $\omega=\dot\theta$ denote its center-of-mass velocity and angular velocity, respectively. We collect these quantities into the payload \emph{twist},
$\vv{\nu}=[\vv{v}_c^{\!\top},\,\omega]^{\!\top}=\dot{\vv{x}}$. Robot $i$ applies a force $\vv{f}_i$ and a torque $\tau_i$ at its grip.  Each follower measures its grip velocity $\vv{v}_i$ and the angular velocity $\omega$, and knows its own grip offset, the team size $N$ and the payload parameters.

Obstacles are stationary disks of known radius. A follower detects an obstacle's center location with respect to its grip when the distance to the obstacle's center is less than $d_{\rm act}$. We index each detected obstacle--follower pair by $a$. Thus, an obstacle detected by multiple followers corresponds to multiple pairs. For pair $a$, let $i_a$ denote the follower, $d_a$ the distance from the obstacle center to the grip, and $\vv{n}_a$ the unit vector pointing from the obstacle center to the grip. We call $\vv{n}_a$ the \emph{bearing}. A pair is \emph{active} when $0<d_a<d_{\rm act}$.

The leader has the same local sensing and parameter knowledge as a follower. In addition, it knows all grip offsets, measures its own world pose and the payload's linear and angular acceleration. Using its pose and the known grip geometry, the leader can compute the world positions of all grips.

\subsection{System model}\label{sec:dyn}

The rigid payload connects motion measured at a grip to motion of the whole body. The velocity at grip $i$ is the sum of the center-of-mass velocity and the velocity caused by rotation: $\vv{v}_i=\vv{v}_c+\omega\perpv{r_i}$, where
$\perpv{r_i}=[-r_{iy},\,r_{ix}]^{\!\top}$ is the grip offset rotated by $90^\circ$. We write this relation as 
\begin{equation}\label{eq:B}
  \vv{v}_i=B_i\vv{\nu}, \qquad
  B_i=
  \begin{bmatrix}
    1 & 0 & -r_{iy}\\
    0 & 1 & r_{ix}
  \end{bmatrix}.
\end{equation}

The same grip geometry determines how a local force acts on the payload. For planar vectors, we use $\vv{a}\times\vv{b}=a_xb_y-a_yb_x$ to denote the scalar cross product. 
A force $\vv{f}_i$ contributes the moment $\vv{r}_i\times\vv{f}_i$ about the center of mass. Including the torque $\tau_i$ applied directly through the grip, robot $i$ contributes the wrench 
\begin{equation}\label{eq:gripwrench}
  \cF_i=B_i^{\!\top}\vv{f}_i+\begin{bmatrix}0 & 0 & \tau_i\end{bmatrix}^{\!\top}.
\end{equation}
Thus, $B_i$ maps payload motion to grip velocity, while $B_i^{\!\top}$ maps grip force to payload wrench. 

Following \cite{wang2018ouijabots,wang2016force}, the payload dynamics are
\begin{equation}\label{eq:plant}
  M\dot{\vv{v}}_c
  =\textstyle\sum_i\vv{f}_i-\mu_v\vv{v}_c,
  \qquad
  J\dot\omega
  =\textstyle\sum_i
    \big(\tau_i+\vv{r}_i\times\vv{f}_i\big)
    -\tfrac{\mu_vJ}{M}\omega.
\end{equation}
Each follower assists transport by compensating for $1/N$ of the modeled
viscous friction: $\vv{f}_i^{\rm carry}=\mu_v\vv{v}_c/N$ and
$\tau_i^{\rm carry}=\mu_vJ\omega/(NM)$. The follower obtains $\vv{v}_c$ from its local measurements using $\vv{v}_c=\vv{v}_i-\omega\perpv{r_i}$.

To direct the team toward its destination, the leader selects a desired
translation velocity $\vv{v}_d$ and angular velocity $\omega_d$. For a desired twist
$\vv{\nu}_d=[\vv{v}_d^{\!\top},\,\omega_d]^{\!\top}$, the leader applies
\begin{equation}\label{eq:leader}
  \vv{f}_1=K_1(\vv{v}_d-\vv{v}_c), \qquad
  \tau_1=K_2(\omega_d-\omega)-\vv{r}_1\times\vv{f}_1,
\end{equation}
with positive gains $K_1$ and $K_2$. The last term of the torque cancels the moment produced by the leader's own force. This controller
is also used to generate probing motions for obstacle localization.

Let $\cW=\sum_{i=2}^{N}\cF_i$ denote the followers' aggregate wrench. Summing \eqref{eq:gripwrench} over all robots and using \eqref{eq:plant}, the leader computes it from the measured accelerations and its own wrench:
\begin{equation}\label{eq:aggregate}
  \cW=\begin{bmatrix}M\dot{\vv{v}}_c+\mu_v\vv{v}_c\\ J\dot\omega+\mu_vJ\omega/M\end{bmatrix}-\cF_1 .
\end{equation}

Under the carrying law alone, the $N-1$ followers apply a total force
$(N-1)\mu_v\vv{v}_c/N$ and a total grip torque
$(N-1)\mu_vJ\omega/(NM)$. Their carrying forces also produce moments
about the center of mass, but those moments sum to zero because
$\sum_{i=2}^{N}\vv{r}_i=\vv{0}$.
The aggregate carrying wrench therefore is given by
\[
  \cW=A_c\vv{\nu}, \qquad
  A_c=\frac{N-1}{N}
      \mathrm{diag}\big(\mu_v,\mu_v,\mu_vJ/M\big).
\]
The matrix $A_c$ contains no obstacle information. The next section adds
an obstacle-dependent reaction and shows how the leader can recover
obstacle locations from the resulting wrench.

 \section{Obstacle Recovery}\label{sec:exact}
 
In this section, we design a follower reaction law and establish conditions under which
the resulting aggregate wrench measurements uniquely determine obstacle
locations and the responding followers. The analysis considers a fixed
payload configuration, with the grip positions, obstacle distances $d_a$,
and bearings $\vv{n}_a$ held fixed while the twist varies.
Section~\ref{sec:plant} then describes how the leader can collect
the required measurements using finite-duration probing motions.

\subsection{Follower reaction law}\label{sec:law}
 
Let $\mathcal A_i$ be the set of active obstacle--follower pairs
associated with follower $i$. In addition to its carrying force, follower $i$ applies
\begin{equation}\label{eq:avoid}
  \vv{f}_i^{\rm avoid}
  =\sum_{a\in\mathcal A_i}
    \beta(d_a)
    \max\!\big(0,-\vv{n}_a^{\!\top}\vv{v}_i\big)\vv{n}_a,
\end{equation}
where $\beta:(0,d_{\rm act})\to(0,\infty)$ is a specified,
strictly decreasing function known to all robots. The follower applies $\vv{f}_i=\vv{f}_i^{\rm carry}+\vv{f}_i^{\rm avoid}$, while its directly applied torque remains $\tau_i^{\rm carry}$.
 
Since $\vv{n}_a$ points from the obstacle toward the follower, pair $a$ reacts only
when $\vv{n}_a^{\!\top}\vv{v}_i<0$, that is, when the follower moves toward the
obstacle. The reaction force points away from the obstacle, grows with the approach speed and
vanishes at rest. An active pair is \emph{resisting} when the current motion makes its
reaction nonzero.

\subsection{Piecewise-linear map from twist to wrench}\label{sec:struct}
 
By \eqref{eq:B} and \eqref{eq:gripwrench}, an active pair $a$ contributes
\begin{equation}\label{eq:g}
  \cF_a(\vv{\nu})
  =\beta_a\max\!\big(0,-\vv{g}_a^{\!\top}\vv{\nu}\big)\vv{g}_a,
  \quad
  \vv{g}_a=B_{i_a}^{\!\top}\vv{n}_a
  =
  \begin{bmatrix}
    \vv{n}_a\\
    \vv{r}_{i_a}\times\vv{n}_a
  \end{bmatrix},
\end{equation}
where $\beta_a=\beta(d_a)$. The third entry of $\vv{g}_a$ is
$h_{i_a}(\vv{n}_a)$, where $h_i(\vv{n})=\vv{r}_i\times\vv{n}$ denotes follower $i$'s signed moment arm in direction $\vv{n}$. Summing over all active pairs gives $\cW(\vv{\nu})=A_c\vv{\nu}+\sum_a\cF_a(\vv{\nu})$.
 
The vector $\vv{g}_a$ decides both when the pair resists, through
$\vv{g}_a^{\!\top}\vv{\nu}=\vv{n}_a^{\!\top}\vv{v}_{i_a}<0$, and the direction of its
wrench. Positive scaling of the twist scales the wrench without changing which pairs
resist.
 
Pair $a$ starts or stops resisting when $\vv{g}_a^{\!\top}\vv{\nu}$ changes sign. The twists satisfying $\vv{g}_a^{\!\top}\vv{\nu}=0$ form a plane through the origin, which we call the pair's \emph{switching plane}. These planes divide twist space into regions called \emph{cells}. Within a cell, every pair remains on the same side of its switching plane, so the set $S$ of resisting pairs is fixed.
 
For a twist in that cell, a resisting pair $a$ contributes $-\beta_a\vv{g}_a\vv{g}_a^{\!\top}\vv{\nu}$, while a nonresisting pair contributes zero. The aggregate wrench is therefore linear within the cell:
\begin{equation}\label{eq:cell}
  \cW(\vv{\nu})=A_S\vv{\nu}, \qquad
  A_S=A_c-\sum_{a\in S}\beta_a\vv{g}_a\vv{g}_a^{\!\top}.
\end{equation}
 
Each cell is therefore a cone from the origin. Fig.~\ref{fig:sphere} draws the cells
where these cones meet a sphere centered at the origin, each switching plane meeting it
in a \emph{great circle}.
 
\begin{figure}[t]
\centering
\includegraphics[width=0.8\columnwidth]{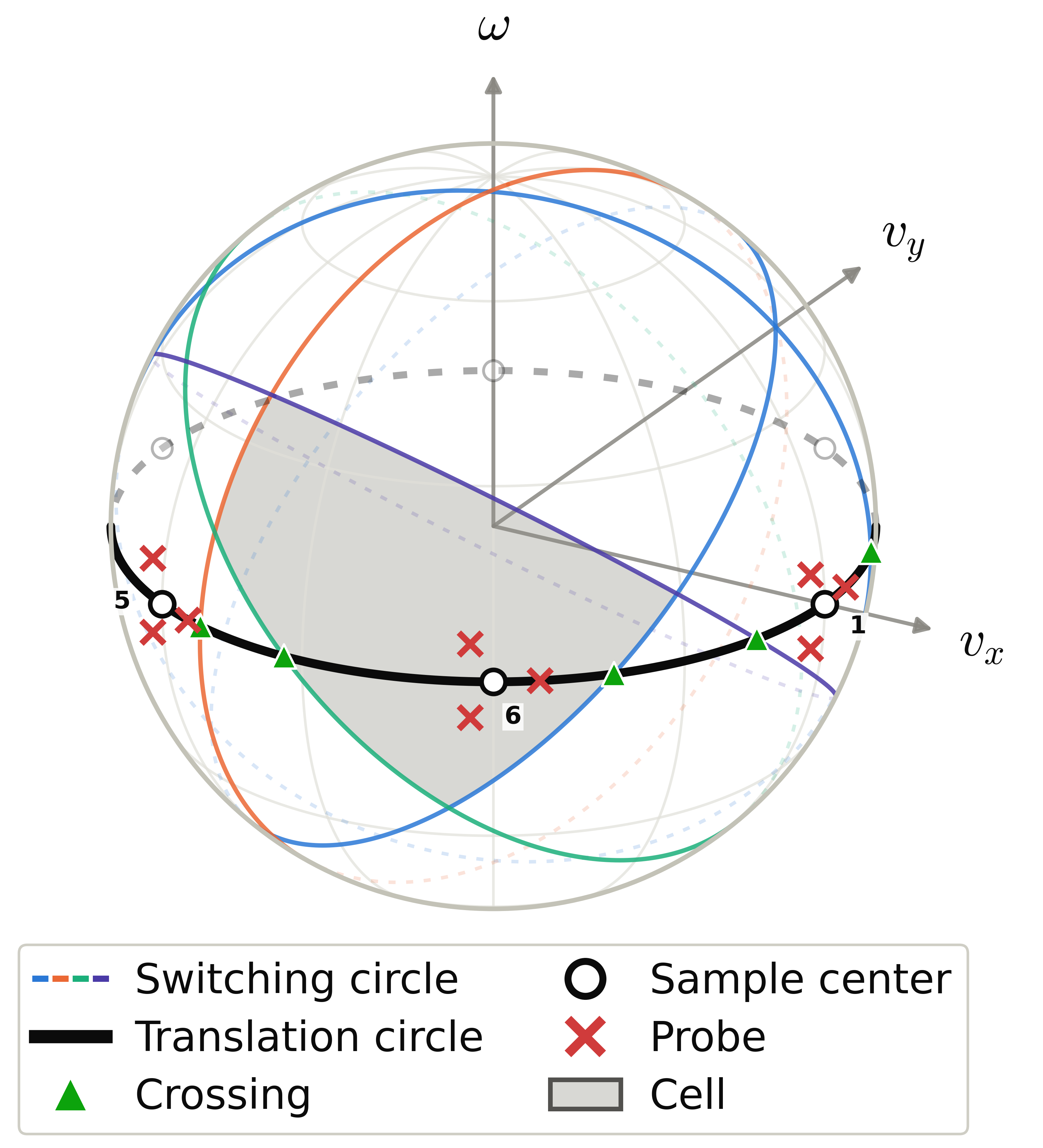}
\caption{The cells of twist space, which are cones, drawn where they meet a sphere
centered at the origin. Each colored great circle is where one pair's switching plane
meets the sphere, and the shaded region is where one cell meets it. The black circle is
the translation circle of Section~\ref{sec:meet}, with crossings (triangles) and
sample centers (circles), numbered as in Fig.~\ref{fig:scene}, each surrounded by its
three probes (crosses, spread exaggerated for visibility). Hidden parts are faded.}
\label{fig:sphere}
\end{figure}

\subsection{Recovery from observations}\label{sec:localmap}
 
An \emph{observation} is a twist $\vv{\nu}$ together with its aggregate wrench
$\cW(\vv{\nu})$. Within a cell the map is linear by \eqref{eq:cell}, so three
observations whose twists $\vv{\nu}_1,\vv{\nu}_2,\vv{\nu}_3$ lie in one cell $S$ and
are linearly independent, i.e., $\det[\vv{\nu}_1\ \vv{\nu}_2\ \vv{\nu}_3]\neq0$,
determine the local matrix $A_S$ of that cell:
\begin{equation}\label{eq:localmap}
  A_S=\big[\cW(\vv{\nu}_1)\ \cW(\vv{\nu}_2)\ \cW(\vv{\nu}_3)\big]
      \big[\vv{\nu}_1\ \vv{\nu}_2\ \vv{\nu}_3\big]^{-1}.
\end{equation}
Two cells are \emph{adjacent} if they lie on opposite sides of one switching plane
and on the same side of every other. The leader does not know in advance which cell
a twist lies in, or which cells are adjacent. For the exact-recovery result in Section~\ref{sec:thm}, we assume
that measurements from two adjacent cells have been identified
and grouped by cell, but the active pair separating
these cells is unknown.

Across the switching plane of pair $a$ the wrench stays continuous, since the reaction
vanishes there, but the local matrix gains or loses the term of $a$. If no other pair
switches there, the matrix after the crossing minus the one before is a rank-one matrix:
\begin{equation}\label{eq:jump}
  \Delta A=A_{\rm after}-A_{\rm before}
  =\mp\beta_a\vv{g}_a\vv{g}_a^{\!\top},
\end{equation}
with the negative sign when pair $a$ starts resisting.
 
Let $\lambda$ be the nonzero eigenvalue of $\Delta A$ and $\vv{q}$ a unit
eigenvector. Then $|\lambda|=\beta_a\|\vv{g}_a\|^2$ and
$\vv{q}=\pm\vv{g}_a/\|\vv{g}_a\|$, and since the first two entries of $\vv{g}_a$ form
the unit bearing, $c=\|\vv{q}_{1:2}\|=1/\|\vv{g}_a\|$. It follows that 
\begin{equation}\label{eq:recover}
  \vv{g}_a=\pm\|\vv{g}_a\|\vv{q}=\pm\vv{q}/c, \qquad
  \beta_a=|\lambda|c^2, \qquad
  d_a=\beta^{-1}(\beta_a).
\end{equation}
 
The sign of $\lambda$ identifies the resisting side:  If $\lambda<0$, pair $a$ resists in the cell after the crossing;
if $\lambda>0$, it resists in the cell before the crossing. We resolve the sign ambiguity in $\vv{g}_a$ by choosing the
candidate satisfying $\vv{g}_a^{\!\top}\vv{\nu}<0$ for an
observed twist in the resisting cell.

The first two entries of $\vv{g}_a$ give the bearing $\vv{n}_a$.
The responding follower $i_a$ is identified by matching
the third entry of $\vv{g}_a$ with the signed moment arms
$h_i(\vv{n}_a)$ of the followers. This match is unique under
the conditions given in Section~\ref{sec:thm}. The obstacle center
is then $\vv{o}_a=\vv{p}_{i_a}-d_a\vv{n}_a$.

 \subsection{Exact recovery guarantee}\label{sec:thm}
 
The reconstruction of Section~\ref{sec:localmap} needs two things: only pair $a$ may switch
between the two cells, and the recovered moment arm must single out its follower. The
following assumptions ensure both.
 
\noindent\textbf{Assumptions.}
\begin{enumerate}[label=\emph{(A\arabic*)},leftmargin=*,itemsep=2pt,beginpenalty=10000]
\item For every active pair $a$, no other follower has the same moment arm in the
bearing direction $\vv{n}_a$: $h_i(\vv{n}_a)\neq h_{i_a}(\vv{n}_a)$ for every
$i\in\{2,\dots,N\}\setminus\{i_a\}$.
\item No two distinct active pairs have parallel or opposite bearings:
$\vv{n}_a\neq\pm\vv{n}_b$ for $a\neq b$.
\end{enumerate}

Under these assumptions, three observations with linearly independent twists in each of two adjacent cells are enough to recover the pair whose switching plane separates them.
 
\begin{theorem}\label{thm:exact}

At a fixed pose, suppose \emph{(A1)} and \emph{(A2)} hold, and let the switching plane of
an active pair $a$ separate two adjacent cells, each containing three observations with
linearly independent twists. Then the reconstruction of Section~\ref{sec:localmap}, applied to the local matrices that
\eqref{eq:localmap} gives from these observations, recovers exactly the follower $i_a$ and
the obstacle center $\vv{o}_a$.
\end{theorem}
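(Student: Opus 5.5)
We follow the reconstruction of Section~\ref{sec:localmap} step by step and check that each step is exact under \emph{(A1)} and \emph{(A2)}. Three things need checking: the two local matrices are recovered exactly, their difference is exactly $\mp\beta_a\vv{g}_a\vv{g}_a^{\!\top}$ with no contribution from other pairs, and the follower is uniquely identified.

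\emph{Step 1: local matrices.} Inside each of the two cells the set $S$ of resisting pairs is constant, so by \eqref{eq:cell} the map $\vv{\nu}\mapsto\cW(\vv{\nu})$ equals $A_S\vv{\nu}$ on all three observations. Stacking them gives $[\cW(\vv{\nu}_1)\ \cW(\vv{\nu}_2)\ \cW(\vv{\nu}_3)]=A_S[\vv{\nu}_1\ \vv{\nu}_2\ \vv{\nu}_3]$. Because the twist matrix is invertible, \eqref{eq:localmap} returns $A_{\rm before}$ and $A_{\rm after}$ exactly.

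\emph{Step 2: only pair $a$ switches.} This is the step I expect to need the most care. By the definition of adjacency, the two cells lie on the same side of every switching plane other than that of $a$. If that were literally all, every $b\neq a$ would keep its resisting status and \eqref{eq:jump} would follow at once. The subtlety is that another pair $b$ might have the \emph{same} switching plane, i.e.\ $\vv{g}_b\parallel\vv{g}_a$, and then $b$ would switch together with $a$.

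I would rule this out with \emph{(A2)}. The first two entries of $\vv{g}_b$ and $\vv{g}_a$ are unit vectors, so $\vv{g}_b=s\vv{g}_a$ forces $|s|=1$ and hence $\vv{n}_b=\pm\vv{n}_a$. That contradicts \emph{(A2)} for $b\neq a$. Therefore all distinct active pairs have distinct switching planes, and the phrase ``on the same side of every other plane'' is unambiguous. It follows that $S_{\rm after}\triangle S_{\rm before}=\{a\}$, and subtracting the two instances of \eqref{eq:cell} gives $\Delta A=\mp\beta_a\vv{g}_a\vv{g}_a^{\!\top}$.

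\emph{Step 3: recover $\vv{g}_a$, $\beta_a$ and $d_a$.} $\Delta A$ is symmetric with rank one. Its nonzero eigenvalue is $\lambda=\mp\beta_a\|\vv{g}_a\|^2$, and its eigenvector is $\vv{q}=\pm\vv{g}_a/\|\vv{g}_a\|$. Since $\|\vv{n}_a\|=1$, we have $c=1/\|\vv{g}_a\|>0$, so \eqref{eq:recover} gives $\vv{g}_a$ up to sign and gives $\beta_a=|\lambda|c^2$ exactly.

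\begin{itemize}
\item The sign of $\lambda$ tells which cell contains the resisting side.
\item The observed twists in that cell satisfy $\vv{g}_a^{\!\top}\vv{\nu}<0$, and at least one of the three is nonzero. Hence exactly one of the two sign candidates gives a negative inner product, which fixes the sign.
\item $\beta$ is strictly decreasing, hence injective, so $d_a=\beta^{-1}(\beta_a)$ is unique.
\end{itemize}

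\emph{Step 4: identify the follower and the obstacle center.} The first two entries of $\vv{g}_a$ give $\vv{n}_a$, and the third equals $h_{i_a}(\vv{n}_a)$. By \emph{(A1)}, no other follower $i$ has $h_i(\vv{n}_a)=h_{i_a}(\vv{n}_a)$, so matching the third entry against the followers' moment arms, which the leader can compute from the known grip offsets, returns exactly $i_a$. The leader knows $\vv{p}_{i_a}$ from its pose and the grip geometry, so $\vv{o}_a=\vv{p}_{i_a}-d_a\vv{n}_a$ is recovered exactly.
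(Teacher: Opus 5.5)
Your proposal is correct and follows the paper's proof. Both recover the two local matrices exactly via \eqref{eq:localmap} and use \emph{(A2)} to show that no other $\vv{g}_b$ is parallel to $\vv{g}_a$, so only pair $a$ switches and the jump has the rank-one form \eqref{eq:jump}. Both then apply the reconstruction, fixing the sign with a twist from the resisting cell and using \emph{(A1)} for uniqueness of the follower; you additionally spell out details the paper leaves implicit, such as the scale factor having unit magnitude (forcing $\vv{n}_b=\pm\vv{n}_a$) and the injectivity of $\beta$.
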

 
\begin{proof}
By \eqref{eq:localmap}, the observations in each of these cells determine its local
matrix. Label these matrices $A_1$ and $A_2$. Since the first two entries of
$\vv{g}_a$ are $\vv{n}_a$, \emph{(A2)} makes $\vv{g}_a$ nonparallel to every other
$\vv{g}_b$, so no other pair shares the switching plane of $a$, and only $a$ changes
its resisting status between the two cells. Hence $A_2-A_1$ has the rank-one form in
\eqref{eq:jump}, with a negative nonzero eigenvalue if $a$ resists in cell 2 and a
positive one if it resists in cell 1. The reconstruction of Section~\ref{sec:localmap}
then recovers pair $a$, using an observed twist of the resisting cell to fix the sign
of $\vv{g}_a$. Finally, \emph{(A1)} makes the follower unique.
\end{proof}

\section{The probing procedure}\label{sec:plant}
 
This section presents a probing procedure that searches for
the observations required by Theorem~\ref{thm:exact} and uses
them to estimate obstacle locations. 
 
\subsection{Probes, samples and local matrices}\label{sec:probes}
 
A \emph{probe} is a short out-and-back test motion about a fixed \emph{reference pose}
$\vv{x}_{\rm ref}$. Before each probe, the leader \emph{re-centers} the payload, which
returns it close to $\vv{x}_{\rm ref}$ and reduces the motion left by the previous probe.
It then probes with a twist $\vv{\nu}_p$: it sets the desired twist in \eqref{eq:leader} to
$-\vv{\nu}_p$ for a time $T$, then to $\vv{\nu}_p$, and records the twist and the aggregate
wrench as the payload passes back near the reference position, so that every probe sees
nearly the same map (Fig.~\ref{fig:probe}). Both steps are described in the Appendix.

\begin{algorithm}[!t]
\renewcommand{\algorithmicrequire}{\textbf{Input:}}
\renewcommand{\algorithmicensure}{\textbf{Output:}}
\algnewcommand{\Notation}{\item[\textbf{Notation:}]}
\caption{The probing procedure}\label{alg:probe}
\begin{algorithmic}[1]
\Require
\Statex $n_0$: number of initial samples
\Statex $\rho$: spread of a sample
\Statex $\epsilon_r$: residual tolerance
\Statex $\epsilon_D$: jump tolerance
\Statex $\epsilon_m$: merge tolerance
\Ensure
\Statex $\calC$: candidates $(i,\vv o)$, follower $i$ and obstacle center $\vv o$
\Statex $\calU$: unresolved intervals
\Notation
\Statex sample at $\varphi$: sample of spread $\rho$ centered at angle $\varphi$ on the translation circle
\Statex $\widehat A(\varphi)$, $r(\varphi)$, $\bar{\vv u}(\varphi)$: its fitted matrix, residual and mean measured twist
\Statex \textsc{EigDecomp}$(A)$: eigenpairs $(\lambda_l,\vv q_l)$ of symmetric $A$, $\|\vv q_l\|=1$, $|\lambda_1|\ge|\lambda_2|\ge|\lambda_3|$
\Statex $\hat{\vv g}$, $\hat d$: estimates of $\vv g_a$ and $d_a$
\Statex $h_j$, $\vv p_j$: moment arm and grip position of follower $j$
\State $\calC,\calU\gets\emptyset$
\State sample at $\varphi_k=2\pi k/n_0$, $k=0,\dots,n_0-1$; $\varphi_{n_0}=2\pi$ reuses the sample at $\varphi_0$ \label{alg:init}
\For{$k=0,\dots,n_0-1$}
  \State \Call{Search}{$\varphi_k,\varphi_{k+1}$}
\EndFor
\State remove $(i',\vv o')$ from $\calC$ if an earlier $(i,\vv o)\in\calC$ has $i=i'$ and $\|\vv o-\vv o'\|\le\epsilon_m$ \label{alg:merge}
\Statex
\Procedure{Search}{$\varphi_a,\varphi_b$}
  \If{$\max\{r(\varphi_a),r(\varphi_b)\}>\epsilon_r$} \label{alg:resid}
    \State $\calU\gets\calU\cup\{[\varphi_a,\varphi_b]\}$
  \Else
    \State $\widehat D\gets\widehat A(\varphi_b)-\widehat A(\varphi_a)$ \label{alg:jump}
    \State $(\lambda_l,\vv q_l)_{l=1}^{3}\gets$ \Call{EigDecomp}{$\widehat D$}
    \If{$|\lambda_2|\le\epsilon_D<|\lambda_1|$} \label{alg:rank1}
      \State $c\gets\|(\vv q_1)_{1:2}\|$, \ $\hat\beta\gets|\lambda_1|c^2$
      \If{$c{=}0$ \textbf{or} $\hat\beta$ not in the range of $\beta$} \label{alg:check} \Return \EndIf
      \State $\hat d\gets\beta^{-1}(\hat\beta)$
      \State $\varphi^*\gets\varphi_b$ if $\lambda_1<0$, else $\varphi_a$
      \State $\hat{\vv g}{\gets}\pm\vv q_1/c$, with the sign giving $\hat{\vv g}^{\!\top}\bar{\vv u}(\varphi^*){<}0$
      \State $i\gets\argmin_j|h_j(\hat{\vv g}_{1:2})-\hat g_3|$
      \State $\calC\gets\calC\cup\{(i,\,\vv p_i-\hat d\,\hat{\vv g}_{1:2})\}$ \label{alg:cand}
    \ElsIf{$|\lambda_2|>\epsilon_D$}
      \If{$\varphi_b-\varphi_a\ge2\rho$}
        \State sample at $\varphi_m=(\varphi_a+\varphi_b)/2$ \label{alg:split}
        \State \Call{Search}{$\varphi_a,\varphi_m$}, \Call{Search}{$\varphi_m,\varphi_b$}
      \Else
        \State $\calU\gets\calU\cup\{[\varphi_a,\varphi_b]\}$ \label{alg:narrow}
      \EndIf
    \EndIf
  \EndIf
\EndProcedure
\end{algorithmic}
\end{algorithm}

\begin{figure}[t]
\centering
\begin{subfigure}[b]{0.4\columnwidth}
  \centering\includegraphics[width=\linewidth]{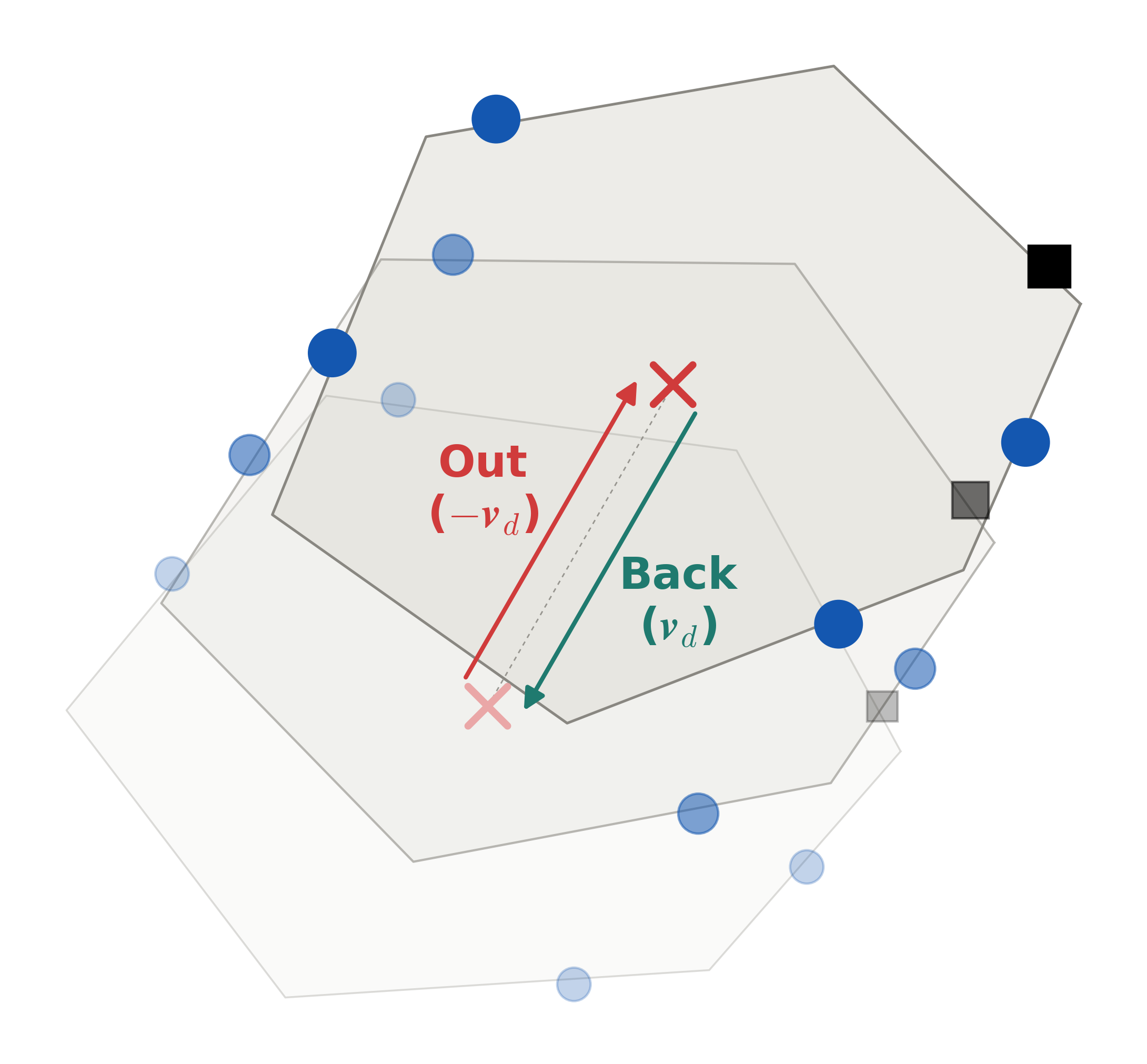}
  \caption{}\label{fig:probe}
\end{subfigure}
\begin{subfigure}[b]{0.55\columnwidth}
  \centering\includegraphics[width=\linewidth]{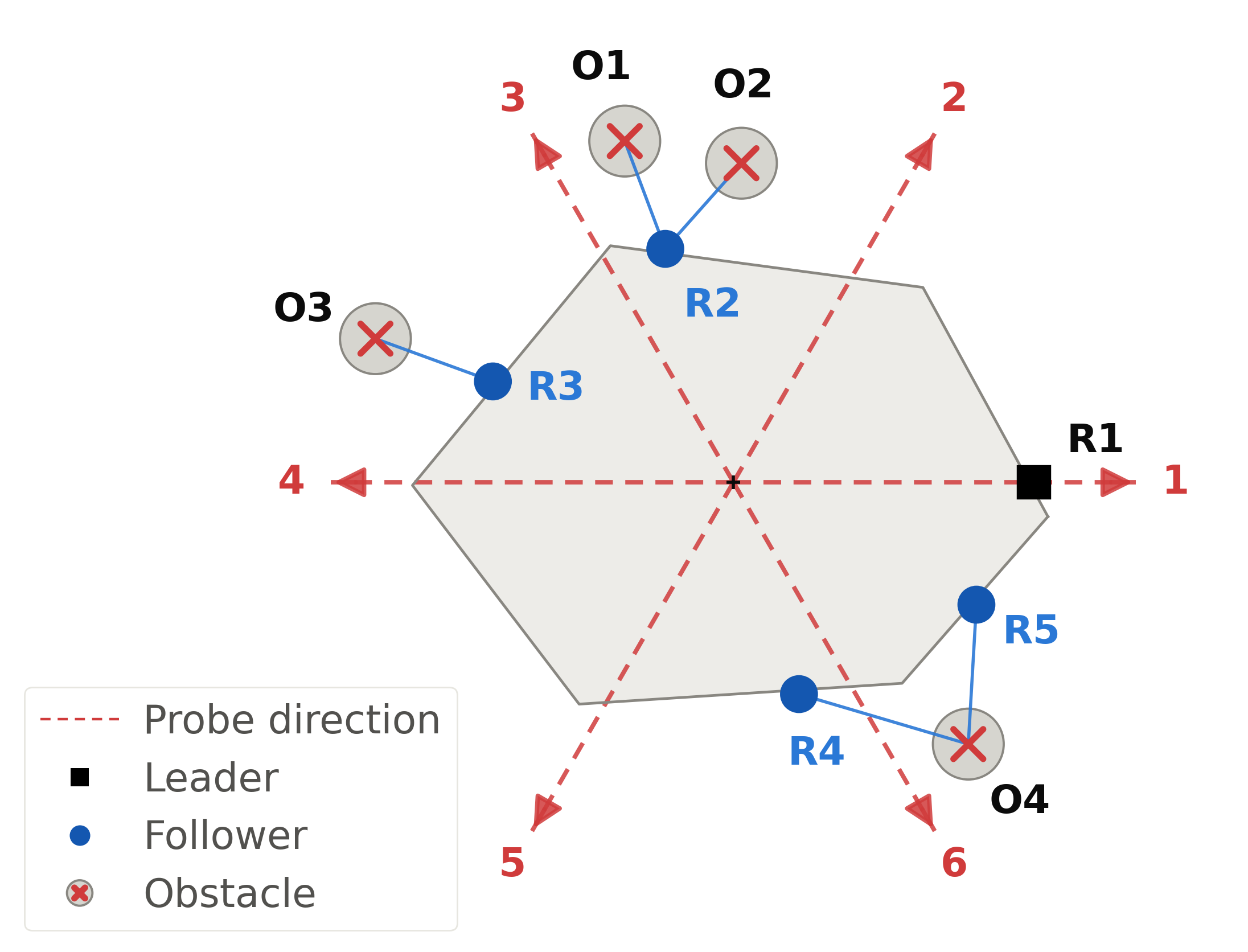}
  \caption{}\label{fig:scene}
\end{subfigure}\hfill
\begin{subfigure}[b]{1.0\columnwidth}
  \centering\includegraphics[width=\linewidth]{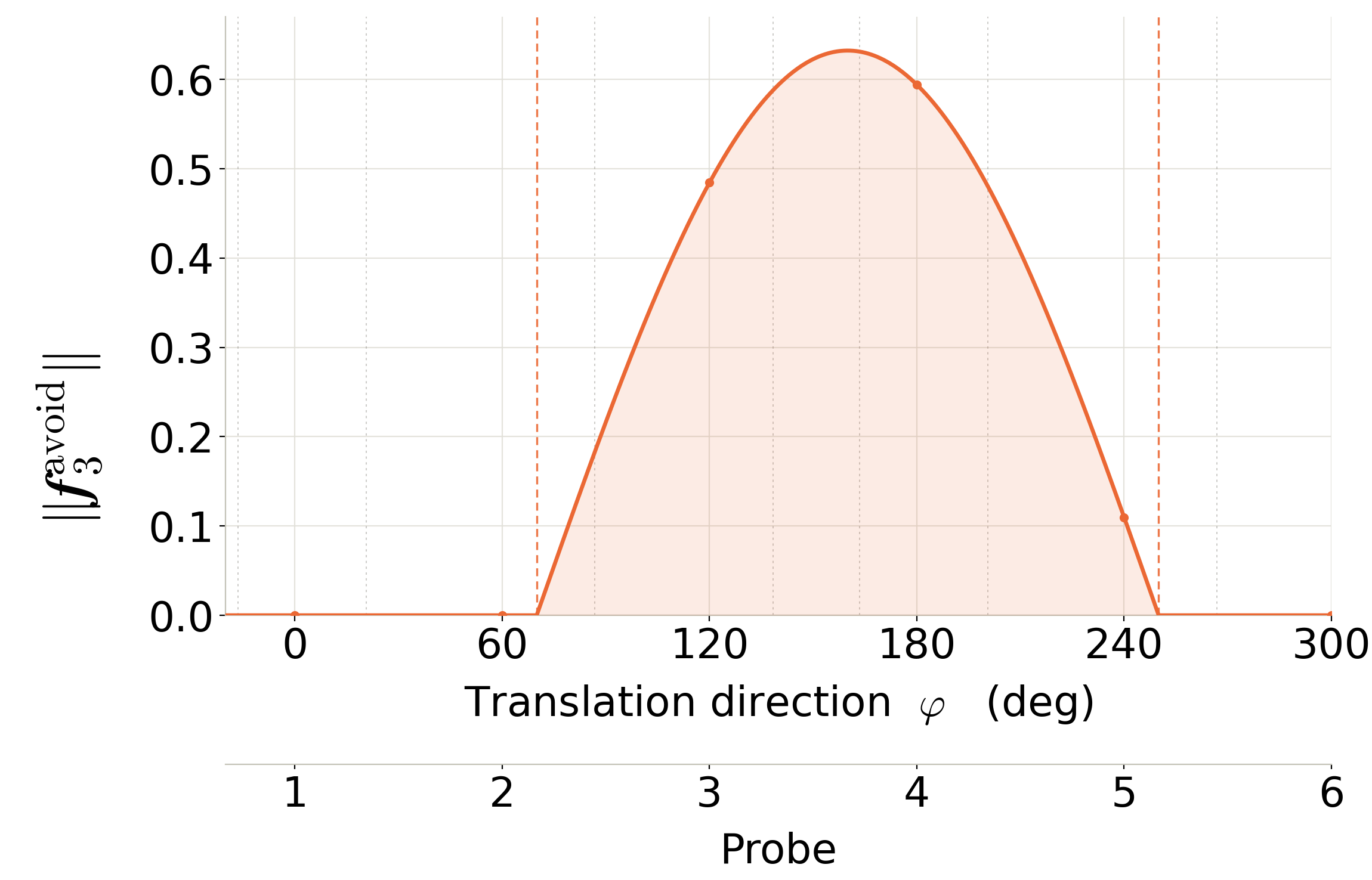}
  \caption{}\label{fig:reaction}
\end{subfigure}
\caption{(a) One probe, with the motion exaggerated: after re-centering, the payload moves out
with $-\vv{\nu}_p$ and back with $\vv{\nu}_p$, and the twist and wrench are recorded as it passes
back near the reference position. In this illustrative probe, the payload moved $2.3$~mm and
turned $0.03^\circ$, and the return ended $3.6~\mu$m from the reference position. (b) Payload,
leader (R1), followers (R2--R5) and the six sample-center directions for $n_0=6$. (c) One
follower's reaction (R3) along the translation circle, zero outside the arc between its
crossings (dashed).}
\label{fig:probing}
\end{figure}
 The leader fits a matrix to each sample. Let $\vv{u}_j$ and $\vv{w}_j$ be the measured
twist and wrench of probe $j$, both multiplied by the same factor so that all twists have
the same magnitude. As shown in Section~\ref{sec:struct}, this rescaling does not
change the local matrix. The fitted matrix is
\[
  \widehat A=\underset{A=A^\top}{\operatorname{arg\,min}}\sum_{j=0}^{2}
     \|A\vv{u}_j-\vv{w}_j\|^2 .
\]
If the three twists lie in one cell, the fit returns its local matrix up to the
measurement error, by \eqref{eq:localmap}. Its \emph{residual} is
$r=\big(\sum_j\|\widehat A\vv{u}_j-\vv{w}_j\|^2\big)^{1/2}$. If the twists lie in different cells, a single symmetric matrix generally cannot fit them accurately, so a large residual
indicates that the sample may straddle a switching plane.

\subsection{The procedure}\label{sec:meet}
 
The procedure has five parameters: the number $n_0$ of initial samples, the spread $\rho$,
and three tolerances $\epsilon_r$, $\epsilon_D$ and $\epsilon_m$
(Algorithm~\ref{alg:probe}).

\emph{Samples.} The leader places $n_0$ equally spaced sample centers on the
\emph{translation circle}, the pure-translation directions $[\cos\varphi,\,\sin\varphi,\,0]^{\!\top}$,
$\varphi\in[0,2\pi)$ (line~\ref{alg:init}), shown in black in Fig.~\ref{fig:sphere} (Fig.~\ref{fig:scene} shows $n_0=6$). Each pair's switching plane meets this circle at two
opposite directions, perpendicular to its bearing, called its \emph{crossings};
Fig.~\ref{fig:reaction} shows one pair's reaction along the circle. A sample whose residual
exceeds $\epsilon_r$ is not used, and the intervals next to it are reported
\emph{unresolved} (line~\ref{alg:resid}).

\emph{Comparing neighbors.} For neighboring samples, the leader computes the \emph{jump}
$\widehat D$, the difference of their fitted matrices (line~\ref{alg:jump}), which by
\eqref{eq:jump} sums the rank-one terms of the pairs that start or stop resisting between them. A nearly zero jump is
treated as no pair switching, although a weak switch, or several switches whose terms cancel, can also produce one. A nearly rank-one jump is
reconstructed as a single pair (line~\ref{alg:rank1}), and any other jump is split at its
midpoint (line~\ref{alg:split}) until the interval is narrower than $2\rho$, after which it
is reported unresolved (line~\ref{alg:narrow}).

\emph{Reconstruction.} Lines~\ref{alg:rank1}--\ref{alg:cand} apply the reconstruction of
Section~\ref{sec:localmap}, with the follower whose moment arm is closest to the recovered
one. If $c=0$, or the recovered strength is not a value of $\beta$, the jump gives no
candidate (line~\ref{alg:check}).

\emph{Merging.} Each pair is found at both of its crossings, so candidates that name the
same follower with obstacle centers within $\epsilon_m$ of each other are merged
(line~\ref{alg:merge}).
 
\section{Results}\label{sec:results}
 
\subsection{Setup}\label{sec:setup}
 
We evaluate the procedure on two sets of scenes. The first contains five designed scenes,
shown in Fig.~\ref{fig:designed}. The second contains thirty random scenes, generated from
fixed seeds, each with 3 to 6 followers at random but balanced grip positions around an ellipse with
semi-axes drawn from $0.8$--$1.6$~m and $0.5$--$1.2$~m, and with the leader at a random
interior point.
Each follower receives an obstacle with probability $0.6$, and then a second with probability
$0.2$, at a random bearing and a distance of $0.55$--$0.9$~m, outside the payload and the robots. Every follower--obstacle pair closer than $d_{\rm act}$ is active,
which gives $147$ active pairs in total. Every run uses the same scenes.
 
The simulations integrate \eqref{eq:plant} with $n_0=6$, the reaction strength $\beta(d)=k\sqrt{1-d/d_{\rm act}}$, and the parameters of
Table~\ref{tab:params}. A pair is \emph{found} when some estimate names its follower and lies within $5$~cm of its
obstacle, and an estimate that matches no pair is a \emph{false positive}.
 
\begin{figure}[t]
\centering\includegraphics[width=\columnwidth]{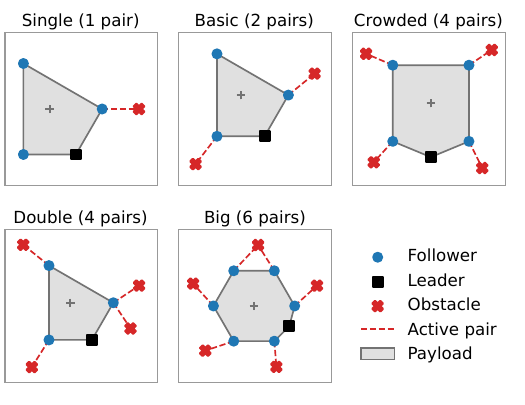}
\caption{The designed scenes at the reference pose, with the number of active pairs.}
\label{fig:designed}
\end{figure}
 
\begin{table}[t]
\caption{Parameters at $K_1=100$~N\,s/m. Some of these parameters scale with $K_1$, with the relation shown in the last column.}\label{tab:params}
\centering\footnotesize
\setlength{\tabcolsep}{4pt}
\begin{tabular}{@{}lllc@{}}
\toprule
 & Quantity & At $K_1=100$~N\,s/m & Scaling \\
\midrule
Payload   & Mass $M$, inertia $J$             & $2$~kg, $0.5$~kg\,m$^2$ & -- \\
          & Carrying gain $\mu_v$             & $2$~N\,s/m              & -- \\
Reaction  & Maximum strength $k$              & $1$~N\,s/m              & -- \\
          & Activation distance $d_{\rm act}$ & $1$~m                   & -- \\
Leader    & Gain $K_2$                        & $25$~N\,m\,s            & $K_1$ \\
Probe     & Speed $\alpha$                    & $3$~mm/s                & -- \\
          & Hold  $T$                            & $0.2$~s                 & $1/K_1$ \\
          & Re-centering duration             & $0.6$~s                 & $1/K_1$ \\
          & Re-centering gain $g_r$           & $15$~s$^{-1}$           & $K_1$ \\
          & Integration step                  & $8$~ms                  & $1/K_1$ \\
Procedure & Initial samples $n_0$             & $6$                     & -- \\
          & Spread $\rho$                     & $0.04$                  & $K_1^{-1/4}$ \\
          & Residual tolerance $\epsilon_r$   & $3\times10^{-6}$        & $1/K_1$ \\
          & Jump tolerance $\epsilon_D$       & $10^{-3}$      & $K_1^{-1/2}$ \\
          & Merge tolerance $\epsilon_m$      & $2$~cm                & -- \\
\bottomrule
\end{tabular}
\end{table}
 
\begin{table}[t]
\caption{Random scenes: pairs found and false positives at each leader gain $K_1$.}\label{tab:res}
\centering\footnotesize
\setlength{\tabcolsep}{4pt}
\begin{tabular}{@{}lcccccc@{}}
\toprule
$K_1$ (N\,s/m) & $100$ & $1000$ & $10^4$ & $10^5$ & $10^6$ & $10^8$ \\
\midrule
Pairs found / $147$ & $54$ & $68$ & $83$ & $101$ & $120$ & $140$ \\
False positives & $3$ & $0$ & $0$ & $1$ & $0$ & $0$ \\
\bottomrule
\end{tabular}
\end{table}
 
\begin{table}[t]
\caption{Designed scenes: pairs found at each leader gain $K_1$, with no false positive.}\label{tab:designed}
\centering\footnotesize
\begin{tabular}{@{}lcc@{}}
\toprule
$K_1$ (N\,s/m) & $100$ & $1000$ \\
\midrule
\textsf{Single} (1) & $1$ & $1$ \\
\textsf{Basic} (2) & $2$ & $2$ \\
\textsf{Crowded} (4) & $4$ & $4$ \\
\textsf{Double} (4) & $4$ & $4$ \\
\textsf{Big} (6) & $4$ & $6$ \\
\midrule
Total ($17$) & $15$ & $17$ \\
\bottomrule
\end{tabular}
\end{table}
 
\subsection{Recovery}\label{sec:limitres}
 
At $K_1=100$~N\,s/m, the procedure finds $15$ of the $17$ designed-scene pairs, missing two
in \textsf{Big}, and $54$ of the $147$ random-scene pairs, with three false positives. The
probes are real motions, so each sample is recorded near, not exactly at, the reference
pose, with twists that differ slightly from the commanded ones.
 
To see how recovery improves as these deviations shrink, we raise $K_1$ up to $10^8$~N\,s/m and change the other parameters with it, following the empirically chosen
schedule of Table~\ref{tab:params}: shorter probes and faster re-centering keep the payload
closer to the reference pose, a smaller spread makes it less likely that a sample straddles
a switching plane, and smaller tolerances let weaker changes in the local matrices be
detected. Recovery improves along this schedule. All $17$ designed-scene pairs are found at
every gain from $1000$~N\,s/m (Table~\ref{tab:designed}), and random-scene recovery rises to
$140$ of the $147$ pairs at $10^8$~N\,s/m, with one false
positive at $10^5$~N\,s/m and none from $10^6$~N\,s/m (Table~\ref{tab:res}).
 
\section{Conclusion}\label{sec:concl}
 
In this work, we considered obstacle localization for a leader--follower team of robots cooperatively transporting a rigid payload without explicit inter-robot communication. Specifically, we investigated how  the leader can localize obstacles outside its sensing range by measuring the aggregate wrench exerted on the payload by the followers. We proposed a reaction law under which each follower resists only motion toward nearby obstacles. We showed that this law makes the map from the payload's twist to the followers' aggregate wrench piecewise linear, with a rank-one change whenever a single follower-obstacle pair starts or stops resisting, and gave conditions under which each obstacle location and its responding follower are recovered in closed form from observations in adjacent cells. We also designed a probing
procedure for the leader to search for such observations by comparing neighboring samples and splitting
the intervals where several pairs switch. Simulation results demonstrated the performance of the proposed method. 

Future work will extend the proposed probing procedure to formally guarantee the collection of observations enabling exact recovery.  We also intend to test our method on real robots.

\bibliographystyle{IEEEtran}
\bibliography{references}

@inproceedings{wang2016kinematic,
  title={Kinematic multi-robot manipulation with no communication using force feedback},
  author={Wang, Zijian and Schwager, Mac},
  booktitle={2016 IEEE international conference on robotics and automation (ICRA)},
  pages={427--432},
  year={2016},
  organization={IEEE}
}

@inproceedings{wang2018ouijabots,
  title={Ouijabots: Omnidirectional robots for cooperative object transport with rotation control using no communication},
  author={Wang, Zijian and Yang, Guang and Su, Xuanshuo and Schwager, Mac},
  booktitle={Distributed Autonomous Robotic Systems: The 13th International Symposium},
  pages={117--131},
  year={2018},
  organization={Springer}
}

@article{wang2016force,
  title={Force-amplifying n-robot transport system (force-ants) for cooperative planar manipulation without communication},
  author={Wang, Zijian and Schwager, Mac},
  journal={The International Journal of Robotics Research},
  volume={35},
  number={13},
  pages={1564--1586},
  year={2016},
  publisher={SAGE Publications Sage UK: London, England}
}

@article{connector,
  author  = {Gong, Zhao and Nie, Zhenguo and Liu, Quan and Liu, Xin-Jun},
  title   = {Design and Control of a Multi-Mobile-Robot Cooperative Transport System
             Based on a Novel Six Degree-of-Freedom Connector},
  journal = {ISA Transactions},
  volume  = {139},
  pages   = {606--620},
  year    = {2023},
  doi     = {10.1016/j.isatra.2023.04.006}
}

@inproceedings{flex,
  author    = {Gombo, Yoshua and Tiwari, Anuj and Devasia, Santosh},
  title     = {Communication-Free Cohesive Flexible-Object Transport Using
               Decentralized Robot Networks},
  booktitle = {American Control Conference (ACC)},
  address   = {New Orleans, LA, USA},
  pages     = {106--111},
  year      = {2021}
}

@inproceedings{lissandrini2020decentralized,
  title={Decentralized nonlinear mpc for robust cooperative manipulation by heterogeneous aerial-ground robots},
  author={Lissandrini, Nicola and Verginis, Christos K and Roque, Pedro and Cenedese, Angelo and Dimarogonas, Dimos V},
  booktitle={2020 IEEE/RSJ International Conference on Intelligent Robots and Systems (IROS)},
  pages={1531--1536},
  year={2020},
  organization={IEEE}
}

@article{muhammed2026real,
  title={Real-time decentralized model predictive control for cooperative multi-robot object transport: experimental validation},
  author={Muhammed, Ibrahim and Nada, Ayman A and El-Hussieny, Haitham},
  journal={Scientific reports},
  volume={16},
  number={1},
  pages={9824},
  year={2026},
  publisher={Nature Publishing Group UK London}
}

@article{dmpclf,
  author  = {Franz{\`e}, Giuseppe and Lucia, Walter and Tedesco, Francesco},
  title   = {A Distributed Model Predictive Control Scheme for
             Leader--Follower Multi-Agent Systems},
  journal = {International Journal of Control},
  volume  = {91},
  number  = {2},
  pages   = {369--382},
  year    = {2018},
  doi     = {10.1080/00207179.2017.1282178}
}

@article{tian2025multi,
  title={Multi-robot cooperative transportation of irregular objects by multi-objective optimization with distributed control},
  author={Tian, Dujie and Zhou, Lelai and Zhang, Chen and Li, Yibin},
  journal={IEEE Robotics and Automation Letters},
  year={2025},
  publisher={IEEE}
}

@article{culbertson2021decentralized,
  title={Decentralized adaptive control for collaborative manipulation of rigid bodies},
  author={Culbertson, Preston and Slotine, Jean-Jacques and Schwager, Mac},
  journal={IEEE Transactions on Robotics},
  volume={37},
  number={6},
  pages={1906--1920},
  year={2021},
  publisher={IEEE}
}

@article{bech,
  author  = {Bechlioulis, Charalampos P. and Kyriakopoulos, Kostas J.},
  title   = {Collaborative Multi-Robot Transportation in Obstacle-Cluttered
             Environments via Implicit Communication},
  journal = {Frontiers in Robotics and AI},
  volume  = {5},
  pages   = {90},
  year    = {2018},
  doi     = {10.3389/frobt.2018.00090}
}

@inproceedings{bujarbaruah2021learning,
  title={Learning environment constraints in collaborative robotics: A decentralized leader-follower approach},
  author={Bujarbaruah, Monimoy and St{\"u}rz, Yvonne R and Holda, Conrad and Johansson, Karl H and Borrelli, Francesco},
  booktitle={2021 IEEE/RSJ International Conference on Intelligent Robots and Systems (IROS)},
  pages={1636--1641},
  year={2021},
  organization={IEEE}
}

@article{isf,
  author  = {Nguyen, Minh and Li, Jingqi and Qu, Gechen and Tomlin, Claire J.},
  title   = {Inverse Safety Filtering: Inferring Constraints from Safety Filters
             for Decentralized Coordination},
  journal = {arXiv preprint arXiv:2604.02687},
  year    = {2026}
}

@article{pwa,
  author  = {Paoletti, Simone and Juloski, Aleksandar Lj. and
             Ferrari-Trecate, Giancarlo and Vidal, Ren{\'e}},
  title   = {Identification of Hybrid Systems: A Tutorial},
  journal = {European Journal of Control},
  volume  = {13},
  number  = {2--3},
  pages   = {242--260},
  year    = {2007},
  doi     = {10.3166/ejc.13.242-260}
}

@inproceedings{carlini,
  author    = {Carlini, Nicholas and Jagielski, Matthew and Mironov, Ilya},
  title     = {Cryptanalytic Extraction of Neural Network Models},
  booktitle = {Advances in Cryptology (CRYPTO)},
  pages     = {189--218},
  year      = {2020}
}

\appendices

\section*{Appendix: implementation details}

Each sample consists of three probes. For a sample centered at angle $\varphi$ with spread
$\rho$, probe $j=0,1,2$ uses the twist
\[
  \vv{\nu}_{p,j}=\alpha\,\frac{\vv{c}+\rho\cos\psi_j\,\vv{t}+\rho\sin\psi_j\,\vv{e}_3}{\sqrt{1+\rho^2}},
  \qquad \psi_j=\frac{2\pi j}{3},
\]
where $\vv{c}=[\cos\varphi,\,\sin\varphi,\,0]^{\!\top}$ is the sample center, the unit vectors
$\vv{t}=[-\sin\varphi,\,\cos\varphi,\,0]^{\!\top}$ and $\vv{e}_3=[0,\,0,\,1]^{\!\top}$ span the
directions around it, and the normalization gives every probe the speed $\alpha$.

Each probe takes three steps. First, the leader re-centers the payload: with
$\vv{x}_{\rm ref}=[\vv{p}_{c,\rm ref}^{\!\top},\,\theta_{\rm ref}]^{\!\top}$, it sets the desired
twist to $-g_r(\vv{x}-\vv{x}_{\rm ref})$ for a fixed duration, with a re-centering gain
$g_r>0$. Second, it sets the desired twist to $-\vv{\nu}_{p,j}$ for a time $T$, then to
$\vv{\nu}_{p,j}$. Third, during this return it records the twist and the aggregate wrench at
the first instant at which $(\vv{p}_c-\vv{p}_{c,\rm ref})^{\!\top}\vv{v}_{p,j}\ge0$, where
$\vv{v}_{p,j}$ is the translational part of $\vv{\nu}_{p,j}$, so that every probe is recorded at
nearly the same center-of-mass position.

\end{document}